\newcommand\transp{^\intercal\kern-\scriptspace}
\renewcommand{\vec}[1]{\mathbf{#1}}  
\newcommand*{\tran}{^{\mkern-1.5mu\mathsf{T}}}
\newcommand{\ud}{\mathrm{d}}
\newtheorem{theorem}{Theorem}
\newtheorem{lemma}{Lemma}
\newenvironment{tightitemize} 
{\vspace{-\topsep}\begin{itemize}\itemsep1pt \parskip0pt \parsep0pt \topsep1pt}
{\end{itemize}\vspace{-\topsep}}
\newenvironment{tightitemizeleft} 
{\vspace{-\topsep}\begin{itemize}[leftmargin=*]\itemsep1pt \parskip0pt \parsep0pt \topsep1pt}
{\end{itemize}\vspace{-\topsep}}
\title{Energy-based Generative Adversarial Networks}
\author{Junbo Zhao, Michael Mathieu and Yann LeCun \\
Department of Computer Science, New York University\\
Facebook Artificial Intelligence Research \\
\texttt{\{jakezhao, mathieu, yann\}@cs.nyu.edu}
}
\begin{document}
\maketitle

\begin{abstract}
We introduce the ``Energy-based Generative Adversarial Network'' model (EBGAN) which views the discriminator as an energy function that attributes low energies to the regions near the data manifold and higher energies to other regions. 
Similar to the probabilistic GANs, a generator is seen as being trained to produce contrastive samples with minimal energies, while the discriminator is trained to assign high energies to these generated samples. 
Viewing the discriminator as an energy function allows to use a wide variety of architectures and loss functionals in addition to the usual binary classifier with logistic output. 
Among them, we show one instantiation of EBGAN framework as using an auto-encoder architecture, with the energy being the reconstruction error, in place of the discriminator. We show that this form of EBGAN exhibits more stable behavior than regular GANs during training. We also show that a single-scale architecture can be trained to generate high-resolution images.
\end{abstract}

\section{Introduction}
\subsection{Energy-based model}

The essence of the energy-based model \citep{lecun2006tutorial} is to build a function that maps each point of an input space to a single scalar, which is called ``energy''. 
The learning phase is a data-driven process that shapes the energy surface in such a way that the desired configurations get assigned low energies, while the incorrect ones are given high energies. 
Supervised learning falls into this framework: for each $X$ in the training set, the energy of the pair $(X, Y)$ takes low values when $Y$ is the correct label and higher values for incorrect $Y$'s. 
Similarly, when modeling $X$ alone within an unsupervised learning setting, lower energy is attributed to the data manifold.
The term \emph{contrastive sample} is often used to refer to a data point causing an energy pull-up, such as the incorrect $Y$'s in supervised learning and points from low data density regions in unsupervised learning.

\subsection{Generative Adversarial Networks}
Generative Adversarial Networks (GAN) \citep{goodfellow2014generative} have led to significant improvements in image generation \citep{denton2015deep, radford2015unsupervised, im2016generating, salimans2016improved}, video prediction \citep{mathieu2015deep} and a number of other domains. 
The basic idea of GAN is to simultaneously train a discriminator and a generator. The discriminator is trained to distinguish \emph{real} samples of a dataset from \emph{fake} samples produced by the  generator. The generator uses input from an easy-to-sample random source, and is trained to produce fake samples that the discriminator cannot distinguish from real data samples. During training, the generator receives the gradient of the output of the discriminator with respect to the fake sample.
In the original formulation of GAN in \cite{goodfellow2014generative}, the discriminator produces a probability and, under certain conditions, convergence occurs when the distribution produced by the generator matches the data distribution.
From a game theory point of view, the convergence of a GAN is reached when the generator and the discriminator reach a Nash equilibrium.

\subsection{Energy-based Generative Adversarial Networks}
In this work, we propose to view the discriminator as an energy function (or a contrast function) without explicit probabilistic interpretation. The energy function computed by the discriminator can be viewed as a trainable cost function for the generator. The discriminator is trained to assign low energy values to the regions of high data density, and higher energy values outside these regions. Conversely, the generator can be viewed as a trainable parameterized function that produces samples in regions of the space to which the discriminator assigns low energy. While it is often possible to convert energies into probabilities through a Gibbs distribution \citep{lecun2006tutorial}, the absence of normalization in this energy-based form of GAN provides greater flexibility in the choice of architecture of the discriminator and the training procedure. 

The probabilistic binary discriminator in the original formulation of GAN can be seen as one way among many to define the contrast function and loss functional, as described in \cite{lecun2006tutorial} for the supervised and weakly supervised settings, and \cite{ranzato-unsup-07} for unsupervised learning. 
We experimentally demonstrate this concept, in the setting where the discriminator is an auto-encoder architecture, and the energy is the reconstruction error. More details of the interpretation of EBGAN are provided in the appendix \ref{app:more}.

Our main contributions are summarized as follows: 
\begin{tightitemize}
\item An energy-based formulation for generative adversarial training.
\item A proof that under a simple hinge loss, when the system reaches convergence, the generator of EBGAN produces points that follow the underlying data distribution.
\item An EBGAN framework with the discriminator using an auto-encoder architecture in which the energy is the reconstruction error.
\item A set of systematic experiments to explore hyper-parameters and architectural choices that produce good result for both EBGANs and probabilistic GANs.
\item A demonstration that EBGAN framework can be used to generate reasonable-looking high-resolution images from the ImageNet dataset at $256 \times 256$ pixel resolution, without a multi-scale approach.
\end{tightitemize}

\section{The EBGAN Model}
\label{sec:model}

Let $p_{data}$ be the underlying probability density of the distribution that produces the dataset.
The generator $G$ is trained to produce a sample $G(z)$, for instance an image, from a random vector $z$, which is sampled from a known distribution $p_z$, for instance $\mathcal{N}(0,1)$.
The discriminator $D$ takes either real or generated images, and estimates the energy value $E\in\mathbb{R}$ accordingly, as explained later. For simplicity, we assume that $D$ produces non-negative values, but the analysis would hold as long as the values are bounded below.

\subsection{Objective functional}
\label{sub:obj}

The output of the discriminator goes through an objective functional in order to shape the energy function, attributing low energy to the  real data samples and higher energy to the generated (``fake'') ones. In this work, we use a margin loss, but many other choices are possible as explained in~\cite{lecun2006tutorial}. Similarly to what has been done with the probabilistic GAN \citep{goodfellow2014generative}, we use a two different losses, one to train $D$ and the other to train $G$, in order to get better quality gradients when the generator is far from convergence. \\
Given a positive margin $m$, a data sample $x$ and a generated sample $G(z)$, the discriminator loss $\mathcal{L}_D$ and the generator loss $\mathcal{L}_G$ are formally defined by:
\begin{align}
\mathcal{L}_D(x, z) &= D(x) + \bm{[} m-D \big( G(z) \big) \bm{]} ^{\bm{+}} \label{discloss} \\
\label{genloss}\mathcal{L}_G(z) &= D \big( G(z) \big)
\end{align}
where $\bm{[\cdot]^{+}}=max(0,\cdot)$.
Minimizing $\mathcal{L}_G$ with respect to the parameters of $G$ is similar to maximizing the second term of $\mathcal{L}_D$. It has the same minimum but non-zero gradients when $D(G(z)) \geq m$.

\subsection{Optimality of the solution}
\label{sub:optim}

In this section, we present a theoretical analysis of the system presented in section~\ref{sub:obj}. We show that if the system reaches a Nash equilibrium, then the generator $G$ produces samples that are indistinguishable from the distribution of the dataset. This section is done in a non-parametric setting, \emph{i.e.} we assume that $D$ and $G$ have infinite capacity.

Given a generator $G$, let $p_G$ be the density distribution of $G(z)$ where $z \sim p_z$. In other words, $p_G$ is the density distribution of the samples generated by $G$. \\
We define $V(G,D)=\int_{x,z} \mathcal{L}_D(x,z) p_{data}(x) p_z(z)  \ud x\ud z$ and $U(G,D)=\int_{z} \mathcal{L}_G(z) p_z(z)\ud z$. We train the discriminator $D$ to minimize the quantity $V$ and the generator $G$ to minimize the quantity~$U$. \\
A Nash equilibrium of the system is a pair $(G^*, D^*)$ that satisfies:
\begin{eqnarray}
V(G^*, D^*) \leq V(G^*, D) & \quad & \forall D \label{nash:V} \\
U(G^*, D^*) \leq U(G, D^*) & \quad & \forall G \label{nash:U}
\end{eqnarray}

\begin{theorem}
\label{theo:nash}
If $(D^*, G^*)$ is a Nash equilibrium of the system, then $p_{G^*} = p_{data}$ almost everywhere, and $V(D^*, G^*) = m$.
\end{theorem}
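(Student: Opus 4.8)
The plan is to use the two Nash inequalities in turn. First I would rewrite the two functionals in terms of $p_{G}$ instead of $z$: since $\int_z f(G(z))\,p_z(z)\,\ud z = \int_x f(x)\,p_{G}(x)\,\ud x$, we get $V(G,D) = \int_x \big(D(x)\,p_{data}(x) + [m-D(x)]^+\,p_{G}(x)\big)\,\ud x$ and $U(G,D) = \int_x D(x)\,p_{G}(x)\,\ud x$. The key feature is that both integrands decompose pointwise in $x$, so the two optimizations can be carried out value by value.

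Second, I would characterize $D^*$ for the fixed generator $G^*$. Condition \eqref{nash:V} says $D^*$ globally minimizes $V(G^*,\cdot)$ over all $D \ge 0$, hence it minimizes pointwise the scalar function $\varphi_x(t) = t\,p_{data}(x) + [m-t]^+\,p_{G^*}(x)$ over $t \ge 0$. A short case split on the sign of $p_{data}(x) - p_{G^*}(x)$ gives: the minimizer is $t=0$ when $p_{data}(x) > p_{G^*}(x)$; it is $t=m$ (or any $t \ge m$ if in addition $p_{data}(x)=0$) when $p_{data}(x) < p_{G^*}(x)$; it is any $t \in [0,m]$ when $p_{data}(x) = p_{G^*}(x)$; and in every case $\min_{t \ge 0}\varphi_x(t) = m\min(p_{data}(x),p_{G^*}(x))$. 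Comparing $V(G^*,D^*)$ with the value of $V(G^*,\cdot)$ at the explicit minimizer (which equals $m\int_x\min(p_{data},p_{G^*})\,\ud x$) forces $D^*(x)$ to be a pointwise minimizer for a.e.\ $x$; I would record three consequences: $D^* = 0$ a.e.\ on $\{p_{data} > p_{G^*}\}$, $D^* \ge m > 0$ a.e.\ on $\{p_{data} < p_{G^*}\}$, and $V(G^*,D^*) = m\int_x \min(p_{data},p_{G^*})\,\ud x$.

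Third, I would bring in \eqref{nash:U}: $G^*$ minimizes $U(\cdot,D^*) = \int_x D^*(x)\,p_{G}(x)\,\ud x$. Let $d^*$ be the essential infimum of $D^*$, so $d^* \ge 0$. For every $G$ we have $U(G,D^*) \ge d^*$, while generators whose output concentrates on $\{D^* < d^* + \varepsilon\}$ bring $U(\cdot,D^*)$ below $d^* + \varepsilon$; together with \eqref{nash:U} this forces $U(G^*,D^*) = d^* = \int_x D^*\,p_{G^*}\,\ud x$. Since $D^* - d^* \ge 0$ a.e., we get $(D^* - d^*)\,p_{G^*} = 0$ a.e., i.e.\ $p_{G^*}$ is supported on $\{D^* = d^*\}$.

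Finally, I would deduce $p_{G^*} = p_{data}$ by contradiction. If the two densities differed on a positive-measure set, then, since both integrate to $1$, the set $A = \{p_{data} > p_{G^*}\}$ has positive measure; but $D^* = 0$ a.e.\ on $A$ forces $d^* = 0$, so $p_{G^*}$ vanishes a.e.\ on $\{D^* > 0\}$, which by the second step contains $B = \{p_{data} < p_{G^*}\}$ up to a null set. Splitting the total mass over the partition into $A$, $B$, and $C = \{p_{data} = p_{G^*}\}$, and using $p_{G^*} < p_{data}$ strictly on the positive-measure set $A$, $p_{G^*} = p_{data}$ on $C$, and $\int_B p_{G^*} = 0$, we obtain $1 = \int_A p_{G^*} + \int_C p_{G^*} < \int_A p_{data} + \int_C p_{data} \le 1$, a contradiction. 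Hence $p_{G^*} = p_{data}$ a.e., and substituting this into $V(G^*,D^*) = m\int_x \min(p_{data},p_{G^*})\,\ud x$ yields $V(D^*,G^*) = m$. I expect this last step to be the real obstacle: it is where one must combine the pointwise shape of $D^*$ coming from \eqref{nash:V} with the support restriction on $p_{G^*}$ coming from \eqref{nash:U}, and the measure-theoretic bookkeeping — essential infima, and the null set $\{p_{data} = p_{G^*} = 0\}$ on which $D^*$ is completely unconstrained — must be handled with care.
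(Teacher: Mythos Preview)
Your proposal is correct and shares its first half with the paper's proof: both rewrite $V(G^*,D)$ as $\int_x \big(D(x)\,p_{data}(x)+[m-D(x)]^+\,p_{G^*}(x)\big)\,\ud x$ and minimize pointwise in $D$, arriving at $V(G^*,D^*)=m\int_x\min(p_{data},p_{G^*})\,\ud x\le m$ (the paper writes the same quantity as $m+m\int_x\mathbbm{1}_{p_{data}<p_{G^*}}(p_{data}-p_{G^*})\,\ud x$).

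The route diverges in how \eqref{nash:U} is used. The paper tests \eqref{nash:U} against a \emph{single} comparison generator, namely one with $p_G=p_{data}$, obtaining $\int_x p_{G^*}D^*\,\ud x\le\int_x p_{data}D^*\,\ud x$; adding $\int_x p_{G^*}[m-D^*]^+\,\ud x$ to both sides and using $D^*\le m$ a.e.\ gives $m\le V(G^*,D^*)$ in one line, whence $V(G^*,D^*)=m$ and then $p_{G^*}=p_{data}$ via Lemma~\ref{lem:indicator}. You instead extract the full structural consequence of \eqref{nash:U}---that $p_{G^*}$ is supported on the essential-infimum level set of $D^*$---and finish with a mass-accounting contradiction, obtaining $p_{G^*}=p_{data}$ first and $V=m$ afterwards. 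Your argument is longer and needs the measure-theoretic care you flag (essential infima, the unconstrained region $\{p_{data}=p_{G^*}=0\}$), but it yields more: the support statement is essentially what is needed for Theorem~\ref{theo:charac}. The paper's argument is shorter because it identifies the right single competitor rather than optimizing over all of them.
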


\begin{proof}
First we observe that
\begin{eqnarray}
V(G^*, D) & = & \int_x p_{data}(x)D(x)\ud x + \int_z p_z(z) \left[m - D(G^*(z))\right]^+\ud z \\
& = & \int_x \left(p_{data}(x)D(x) + p_{G^*}(x)\left[m - D(x)\right]^+ \right)\ud x . \label{eqn:vgdef}
\end{eqnarray}
The analysis of the function $\varphi(y) = a y + b (m - y)^+$ (see lemma~\ref{lem:phi} in appendix~\ref{app:technical} for details) shows:\\
(a) $D^*(x) \leq m$ almost everywhere. To verify it, let us assume that there exists a set of measure non-zero such that $D^*(x) > m$. Let $\widetilde{D}(x) = \min(D^*(x), m)$. Then $V(G^*, \widetilde{D}) < V(G^*, D^*)$ which violates equation~\ref{nash:V}. \\
(b) The function $\varphi$ reaches its minimum in $m$ if $a<b$ and in $0$ otherwise. So $V(G^*, D)$ reaches its minimum when we replace $D^*(x)$ by these values. We obtain
\begin{eqnarray}
V(G^*, D^*) & = & m \int_x \mathbbm{1}_{p_{data}(x)<p_{G^*}(x)} p_{data}(x) \ud x + m \int_x \mathbbm{1}_{p_{data}(x)\geq p_{G^*}(x)} p_{G^*}(x) \ud x \\
& = & m \int_x \left( \mathbbm{1}_{p_{data}(x)<p_{G^*}(x)} p_{data}(x) + \left(1 - \mathbbm{1}_{p_{data}(x)< p_{G^*}(x)}\right) p_{G^*}(x) \right) \ud x \\
& = & m \int_x p_{G^*}(x) \ud x + m \int_x \mathbbm{1}_{p_{data}(x)< p_{G^*}(x)} (p_{data}(x) - p_{G^*}(x)) \ud x \\
& = & m + m \int_x \mathbbm{1}_{p_{data}(x)< p_{G^*}(x)} (p_{data}(x) - p_{G^*}(x)) \ud x .  \label{eqn:nonpos}
\end{eqnarray}
The second term in equation~\ref{eqn:nonpos} is non-positive, so $V(G^*, D^*) \leq m$.

By putting the ideal generator that generates $p_{data}$ into the right side of equation~\ref{nash:U}, we get
\begin{eqnarray}
&\displaystyle\int_x p_{G^*}(x)D^*(x)\ud x \leq \int_x p_{data}(x) D^*(x)\ud x .\\
\text{Thus by (\ref{eqn:vgdef}),\quad} &\displaystyle\int_x p_{G^*}(x)D^*(x)\ud x + \int_x p_{G^*}(x)[m-D^*(x)]^+\ud x
\leq V(G^*, D^*)
\end{eqnarray}
and since $D^*(x) \leq m$, we get $m \leq V(G^*, D^*)$.

Thus, $m\leq V(G^*, D^*) \leq m$ \emph{i.e.} $V(G^*, D^*)=m$. Using equation~\ref{eqn:nonpos}, we see that can only happen if $\int_x \mathbbm{1}_{p_{data}(x)< p_G(x)} \ud x = 0$, which is true if and only if $p_G = p_{data}$ almost everywhere (this is because $p_{data}$ and $p_G$ are probabilities densities, see lemma~\ref{lem:indicator} in the appendix~\ref{app:technical} for details).
\end{proof}

\begin{theorem}
\label{theo:charac}
A Nash equilibrium of this system exists and is characterized by (a) $p_{G^*}=p_{data}$ (almost everywhere) and (b) there exists a constant $\gamma \in [0,m]$ such that $D^*(x)=\gamma$ (almost everywhere).\footnote{This is assuming there is no region where $p_{data}(x)=0$. If such a region exists, $D^*(x)$ may have any value in $[0,m]$ for $x$ in this region.}.
\end{theorem}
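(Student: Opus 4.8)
The plan is to establish the statement in two parts: (i) \emph{necessity} --- every Nash equilibrium satisfies (a) and (b); and (ii) \emph{existence} --- there is a concrete pair realizing (a) and (b), which one exhibits directly and then checks against the two equilibrium inequalities. Part (i) piggybacks on Theorem~\ref{theo:nash}, and part (ii) reuses the pointwise analysis of $\varphi$ already carried out in its proof, so the only genuinely new ingredient is the argument that $D^*$ must be constant.

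\textbf{Necessity.} Let $(D^*,G^*)$ be a Nash equilibrium. Theorem~\ref{theo:nash} directly yields (a), i.e.\ $p_{G^*}=p_{data}$ almost everywhere, together with $V(G^*,D^*)=m$; and step (a) of its proof gives $D^*(x)\le m$ almost everywhere. For (b) I would exploit the generator condition, equation~\ref{nash:U}. Substituting $p_{G^*}=p_{data}$ into the definition of $U$, the requirement $U(G^*,D^*)\le U(G,D^*)$ for all $G$ becomes $\int_x p_{data}(x)D^*(x)\,\ud x \le \int_x p_G(x)D^*(x)\,\ud x$ for every attainable density $p_G$. In the infinite-capacity setting $p_G$ ranges over all densities, and concentrating its mass on the positive-measure set $\{\,D^* < \operatorname{ess\,inf} D^* + \varepsilon\,\}$ shows the right-hand side can be pushed arbitrarily close to $\operatorname{ess\,inf} D^*$. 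Hence $\int_x p_{data}(x)D^*(x)\,\ud x \le \operatorname{ess\,inf} D^*$; the reverse inequality holds trivially, so $\int_x p_{data}(x)D^*(x)\,\ud x = \operatorname{ess\,inf}_{\{p_{data}>0\}} D^* =: \gamma$, and this equality together with $D^*\ge\gamma$ on $\{p_{data}>0\}$ forces $D^*(x)=\gamma$ for $p_{data}$-almost every $x$. Since $D\ge 0$ we get $\gamma\ge 0$, and since $D^*\le m$ almost everywhere we get $\gamma\le m$; on $\{p_{data}=0\}$ nothing further constrains $D^*$ beyond $[0,m]$, which is exactly the caveat in the footnote.

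\textbf{Existence.} Take any generator $G^*$ pushing $p_z$ forward to the density $p_{data}$ (such a $G^*$ exists in the non-parametric setting) and set $D^*\equiv\gamma$ for any fixed $\gamma\in[0,m]$; checking equations~\ref{nash:V} and~\ref{nash:U} for this pair proves it is a Nash equilibrium that realizes (a)--(b). For equation~\ref{nash:U}: $U(G,D^*)=\gamma\int_x p_G(x)\,\ud x=\gamma=U(G^*,D^*)$ for every $G$, so the inequality holds with equality. For equation~\ref{nash:V}: by equation~\ref{eqn:vgdef} and $p_{G^*}=p_{data}$ we have $V(G^*,D)=\int_x p_{data}(x)\bigl(D(x)+[m-D(x)]^+\bigr)\,\ud x$, and since the scalar map $y\mapsto y+[m-y]^+$ equals $\max(y,m)\ge m$, it follows that $V(G^*,D)\ge m$, while $V(G^*,D^*)=\int_x p_{data}(x)\max(\gamma,m)\,\ud x=m$ because $\gamma\le m$; hence $V(G^*,D^*)\le V(G^*,D)$ for all $D$.

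\textbf{Anticipated obstacle.} The step I expect to need the most care is the necessity of (b): the idea that the generator's freedom to place probability mass anywhere is precisely what collapses $D^*$ onto a single value on $\operatorname{supp} p_{data}$, plus the essential-infimum and ``almost everywhere'' bookkeeping around the $p_{data}=0$ region. Everything else is a direct recombination of Theorem~\ref{theo:nash} with the analysis of $\varphi$ done in its proof.
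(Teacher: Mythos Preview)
Your proof is correct and follows essentially the same approach as the paper: Theorem~\ref{theo:nash} supplies (a) and the bound $D^*\le m$, while the generator's freedom to concentrate mass on low-$D^*$ regions forces (b). Your essential-infimum formulation is a direct repackaging of the paper's contradiction argument (which builds an explicit $G_0$ shifting mass toward a sublevel set $\{D^*\le C\}$ to violate equation~\ref{nash:U}), and your existence verification simply spells out what the paper dismisses as ``obvious.''
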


\begin{proof}
See appendix \ref{app:technical}.
\end{proof}

\subsection{Using auto-encoders}
In our experiments, the discriminator $D$ is structured as an auto-encoder:
\begin{equation}
D(x) = ||Dec(Enc(x)) - x||.
\end{equation}

\begin{figure}[ht]
  \centering
  \includegraphics[width=1\linewidth]{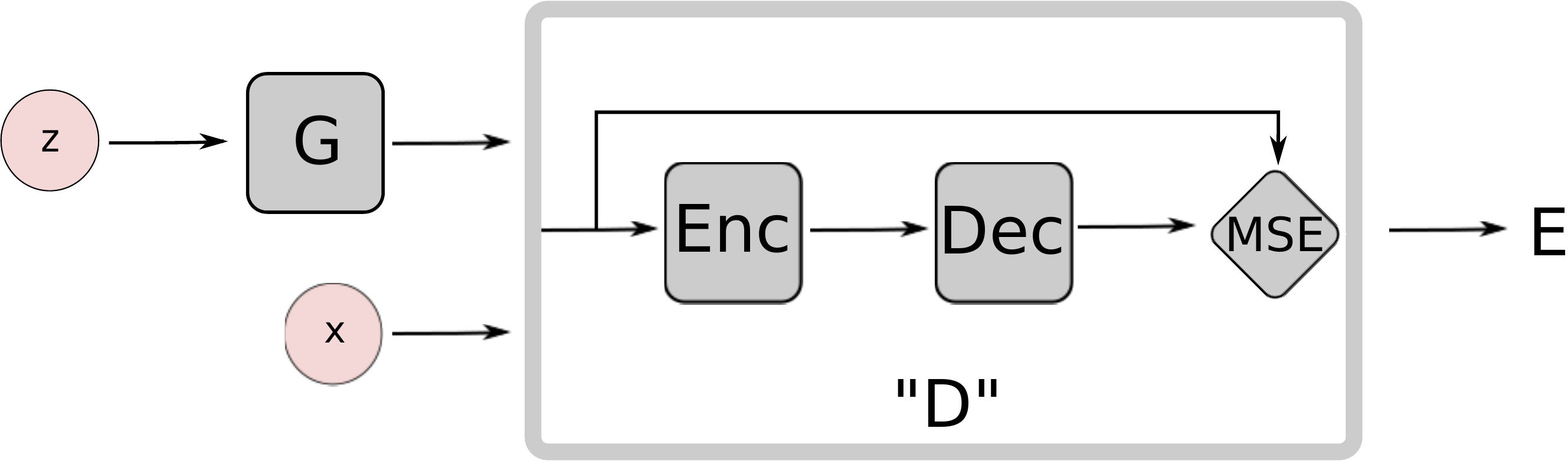}
  \caption{\label{fig:model} EBGAN architecture with an auto-encoder discriminator.}  
\end{figure}

The diagram of the EBGAN model with an auto-encoder discriminator is depicted in figure \ref{fig:model}.
The choice of the auto-encoders for $D$ may seem arbitrary at the first glance, yet we postulate that it is conceptually more attractive than a binary logistic network:
\begin{tightitemizeleft}
\item Rather than using a single bit of target information to train the model, the reconstruction-based output offers a diverse targets for the discriminator. With the binary logistic loss, only two targets are possible, so within a minibatch, the gradients corresponding to different samples are most likely far from orthogonal. This leads to inefficient training, and reducing the minibatch sizes is often not an option on current hardware. On the other hand, the reconstruction loss will likely produce very different gradient directions within the minibatch, allowing for larger minibatch size without loss of efficiency.

\item  Auto-encoders have traditionally been used to represent energy-based model and arise naturally. 
When trained with some regularization terms (see section \ref{subsub:conn}), auto-encoders have the ability to learn an energy manifold without supervision or negative examples.
This means that even when an EBGAN auto-encoding model is trained to reconstruct a \emph{real} sample, the discriminator contributes to discovering the data manifold by itself. 
To the contrary, without the presence of negative examples from the generator, a discriminator trained with binary logistic loss becomes pointless.
\end{tightitemizeleft}

\subsubsection{Connection to the regularized auto-encoders}
\label{subsub:conn}
One common issue in training auto-encoders is that the model may learn little more than an identity function, meaning that it attributes zero energy to the whole space. In order to avoid this problem, the model must be pushed to give higher energy to points outside the data manifold.
Theoretical and experimental results have addressed this issue by regularizing the latent representations \citep{vincent2010stacked, rifai2011contractive, marc2007efficient, kavukcuoglu2010learning}.
Such regularizers aim at restricting the reconstructing power of the auto-encoder so that it can only attribute low energy to a smaller portion of the input points.

We argue that the energy function (the discriminator) in the EBGAN framework is also seen as being regularized by having a generator producing the contrastive samples, to which the discriminator ought to give high reconstruction energies.
We further argue that the EBGAN framework allows more flexibility from this perspective, because: (i)-the regularizer (generator) is fully trainable instead of being handcrafted; (ii)-the adversarial training paradigm enables a direct interaction between the duality of producing contrastive sample and learning the energy function.

\subsection{Repelling regularizer}
We propose a ``repelling regularizer'' which fits well into the EBGAN auto-encoder model, purposely keeping the model from producing samples that are clustered in one or only few modes of $p_{data}$. 
Another technique ``minibatch discrimination'' was developed by \cite{salimans2016improved} from the same philosophy.

Implementing the repelling regularizer involves a Pulling-away Term (PT) that runs at a representation level. Formally, let $S \in \mathbb{R}^{s \times N}$ denotes a batch of sample representations taken from the encoder output layer. Let us define PT as: 
\begin{equation}
f_{PT}(S) = \frac{1}{N (N-1)} \sum_{i} \sum_{j \neq i} \Big( \frac{S_i \tran S_j}{ \|S_i\| \|S_j\| } \Big) ^2.
\end{equation}
PT operates on a mini-batch and attempts to orthogonalize the pairwise sample representation. It is inspired by the prior work showing the representational power of the encoder in the auto-encoder alike model such as \cite{rasmus2015semi} and \cite{zhao2015stacked}.
The rationale for choosing the cosine similarity instead of Euclidean distance is to make the term bounded below and invariant to scale.
We use the notation ``EBGAN-PT'' to refer to the EBGAN auto-encoder model trained with this term.
Note the PT is used in the generator loss but not in the discriminator loss.

\section{Related work}
Our work primarily casts GANs into an energy-based model scope.
On this direction, the approaches studying contrastive samples are relevant to EBGAN, such as the use of noisy samples \citep{vincent2010stacked} and noisy gradient descent methods like contrastive divergence \citep{carreira2005contrastive}.
From the perspective of GANs, several papers were presented to improve the stability of GAN training, \citep{salimans2016improved, denton2015deep, radford2015unsupervised, im2016generating, mathieu2015deep}.

\cite{kim2016deep} propose a probabilistic GAN and cast it into an energy-based density estimator by using the Gibbs distribution. Quite unlike EBGAN, this proposed framework doesn't get rid of the computational challenging partition function, so the choice of the energy function is required to be integratable. 

\section{Experiments}
\subsection{Exhaustive grid search on MNIST}
\label{sub:grid}
In this section, we study the training stability of EBGANs over GANs on a simple task of MNIST digit generation with fully-connected networks. We run an exhaustive grid search over a set of architectural choices and hyper-parameters for both frameworks.

Formally, we specify the search grid in table \ref{tab:grid}. 
We impose the following restrictions on EBGAN models: (i)-using learning rate 0.001 and Adam \citep{kingma2014adam} for both $G$ and $D$; (ii)-\texttt{nLayerD} represents the total number of layers combining $Enc$ and $Dec$. For simplicity, we fix $Dec$ to be one layer and only tune the $Enc$ \#layers; (iii)-the margin is set to 10 and not being tuned.
To analyze the results, we use the \emph{inception score} \citep{salimans2016improved} as a numerical means reflecting the generation quality. Some slight modification of the formulation were made to make figure \ref{fig:hist_all} visually more approachable while maintaining the score's original meaning, $I' = E_x KL(p(y)||p(y|\vec{x}))$\footnote{This form of the ``inception score'' is only used to better analyze the grid search in the scope of this work, but not to compare with any other published work.} (more details in appendix \ref{app:grid}). Briefly, higher $I'$ score implies better generation quality.

\begin{table}[h]
\small
\vspace{-6mm}
\caption{Grid search specs}
\label{tab:grid}
\begin{center}
\vspace{-3mm}
\begin{tabular}{llcc}
\multicolumn{1}{l}{\bf Settings} & \multicolumn{1}{c}{\bf Description} &\multicolumn{1}{c}{\bf EBGANs} &\multicolumn{1}{c}{\bf GANs}
\\ \hline \\
\texttt{nLayerG}   & number of layers in $G$       &[2, 3, 4, 5]              &[2, 3, 4, 5] \\
\texttt{nLayerD}   & number of layers in $D$ &[2, 3, 4, 5]              &[2, 3, 4, 5] \\
\texttt{sizeG}     & number of neurons in $G$     &[400, 800, 1600, 3200]    & [400, 800, 1600, 3200]  \\
\texttt{sizeD}     & number of neurons in $D$      &[128, 256, 512, 1024]     & [128, 256, 512, 1024] \\
\texttt{dropoutD}  & if to use dropout in $D$      &[true, false]             & [true, false] \\
\texttt{optimD}    & to use Adam or SGD for $D$      &adam                      & [adam, sgd] \\
\texttt{optimG}    & to use Adam or SGD for $G$     &adam                      & [adam, sgd]\\
\texttt{lr}        & learning rate      &0.001                     & [0.01, 0.001, 0.0001]\\
\#experiments: & - & {\bf 512}            & {\bf 6144}
\end{tabular}
\end{center}
\end{table}

{\bf Histograms} We plot the histogram of $I'$ scores in figure \ref{fig:hist_all}. 
We further separated out the optimization related setting from GAN's grid (\texttt{optimD}, \texttt{optimG} and \texttt{lr}) and plot the histogram of each sub-grid individually, together with the EBGAN $I'$ scores as a reference, in figure \ref{fig:hist_sep_all}. The number of experiments for GANs and EBGANs are both 512 in every subplot.
The histograms evidently show that EBGANs are more reliably trained. 

Digits generated from the configurations presenting the best inception score are shown in figure \ref{fig:mnist_gen}.

\begin{figure}[h]
  \centering
  \includegraphics[width=1\linewidth]{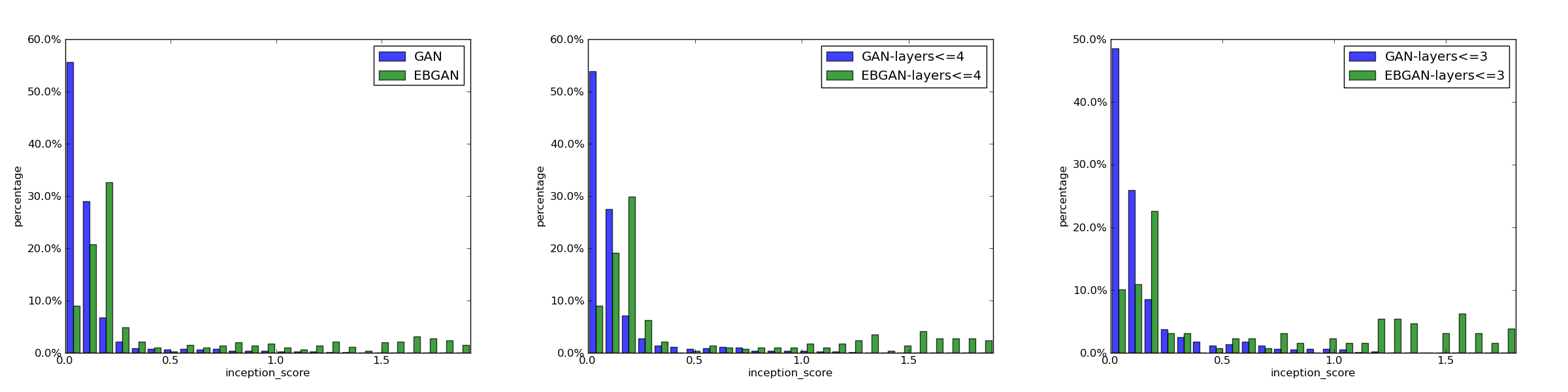}
  \caption{\label{fig:hist_all} {\bf (Zooming in on pdf file is recommended.)} Histogram of the inception scores from the grid search. The x-axis carries the inception score $I$ and y-axis informs the portion of the models (in percentage) falling into certain bins.
Left (a): general comparison of EBGANs against GANs; Middle (b): EBGANs and GANs both constrained by \texttt{nLayer[GD]<=4}; Right (c): EBGANs and GANs both constrained by \texttt{nLayer[GD]<=3}.
}
\end{figure}

\begin{figure}[h]
  \centering
  \vspace{-3mm}
  \includegraphics[width=1\linewidth]{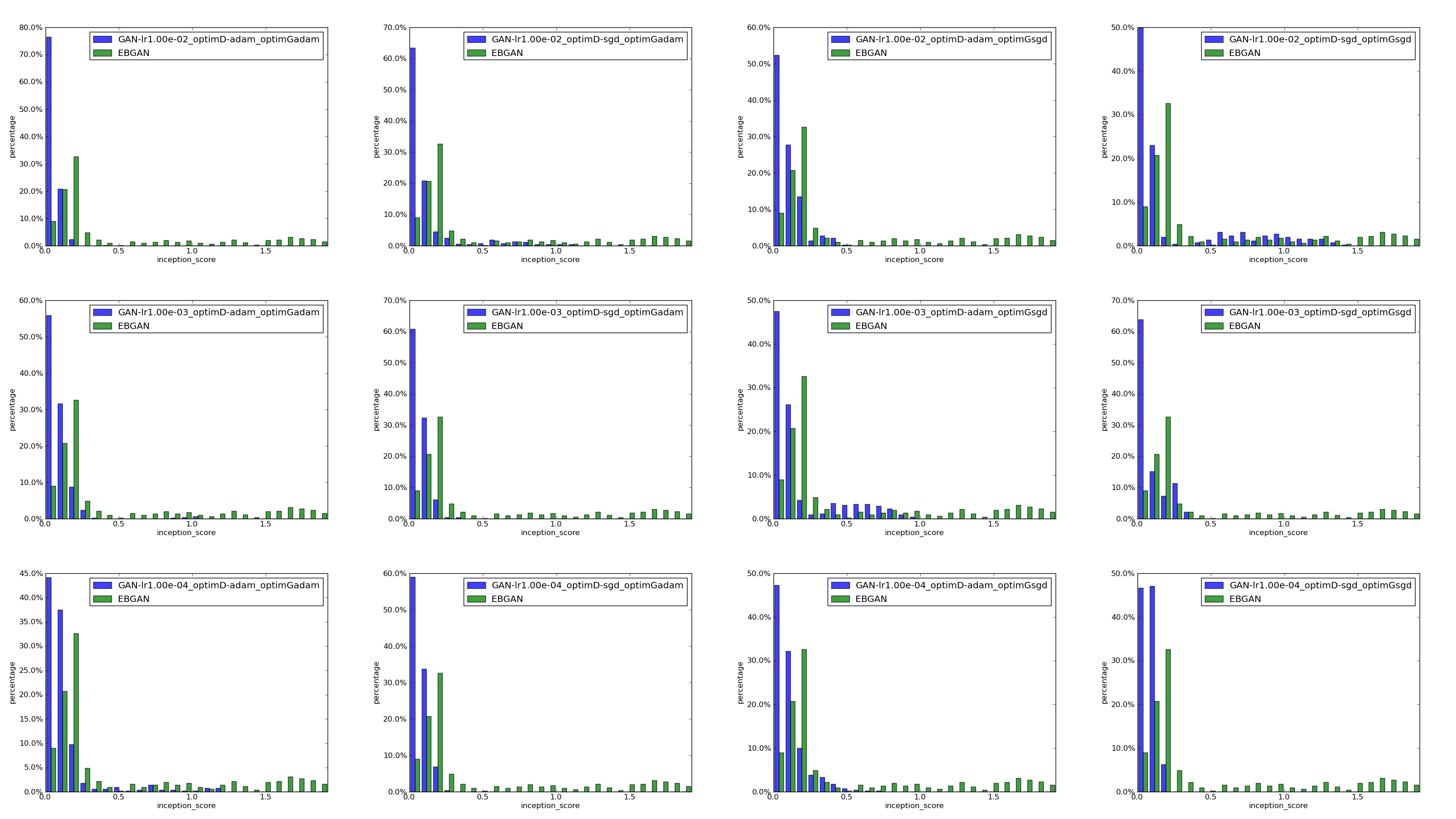}
  \caption{\label{fig:hist_sep_all} {\bf (Zooming in on pdf file is recommended.)} Histogram of the inception scores grouped by different optimization combinations, drawn from \texttt{optimD}, \texttt{optimG} and \texttt{lr} (See text). 
}
\end{figure}

\begin{figure}[ht]
\centering
\minipage{0.30\textwidth}
\includegraphics[width=\linewidth]{./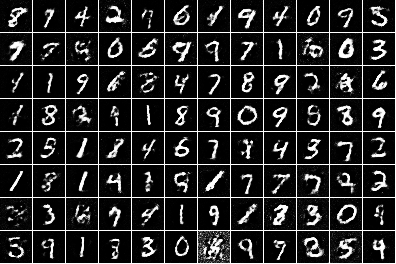}
\endminipage \hspace{5pt}
\minipage{0.3\textwidth}
\includegraphics[width=\linewidth]{./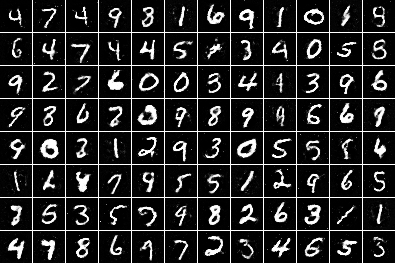}
\endminipage \hspace{5pt}
\minipage{0.3\textwidth}
\includegraphics[width=\linewidth]{./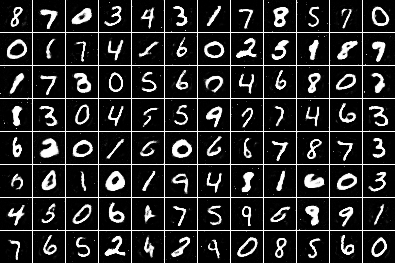}
\endminipage
\caption{Generation from the grid search on MNIST. Left(a): Best GAN model; Middle(b): Best EBGAN model. Right(c): Best EBGAN-PT model.}
\label{fig:mnist_gen}
\end{figure}

\subsection{Semi-supervised learning on MNIST}
\label{sub:semi}
We explore the potential of using the EBGAN framework for semi-supervised learning on permutation-invariant MNIST, collectively on using 100, 200 and 1000 labels.
We utilized a bottom-layer-cost Ladder Network (LN) \citep{rasmus2015semi} with the EGBAN framework (EBGAN-LN). Ladder Network can be categorized as an energy-based model that is built with both feedforward and feedback hierarchies powered by stage-wise lateral connections coupling two pathways.

One technique we found crucial in enabling EBGAN framework for semi-supervised learning is to gradually decay the margin value $m$ of the equation \ref{discloss}. The rationale behind is to let discriminator punish generator less when $p_G$ gets closer to the data manifold. One can think of the extreme case where the contrastive samples are exactly pinned on the data manifold, such that they are ``not contrastive anymore''. This ultimate status happens when $m=0$ and the EBGAN-LN model falls back to a normal Ladder Network. The undesirability of a non-decay dynamics for using the discriminator in the GAN or EBGAN framework is also indicated by Theorem \ref{theo:charac}: on convergence, the discriminator reflects a flat energy surface. However, we posit that the trajectory of learning a EBGAN-LN model does provide the LN (discriminator) more information by letting it see contrastive samples. Yet the optimal way to avoid the mentioned undesirability is to make sure $m$ has been decayed to $0$ when the Nash Equilibrium is reached.  The margin decaying schedule is found by hyper-parameter search in our experiments (technical details in appendix \ref{app:semi}).

From table \ref{tab:semi}, it shows that positioning a bottom-layer-cost LN into an EBGAN framework profitably improves the performance of the LN itself.
We postulate that within the scope of the EBGAN framework, iteratively feeding the adversarial contrastive samples produced by the generator to the energy function acts as an effective regularizer; the contrastive samples can be thought as an extension to the dataset that provides more information to the classifier.
We notice there was a discrepancy between the reported results between \cite{rasmus2015semi} and \cite{pezeshki2015deconstructing}, so we report both results along with our own implementation of the Ladder Network running the same setting.
The specific experimental setting and analysis are available in appendix \ref{app:semi}.
\begin{table}[h]
\small
\caption{The comparison of LN bottom-layer-cost model and its EBGAN extension on PI-MNIST semi-supervised task. Note the results are error rate (in \%) and averaged over 15 different random seeds.}
\label{tab:semi}
\begin{center}
\vspace{-6mm}
\begin{tabular}{lccc}
\multicolumn{1}{c}{\bf model}  &\multicolumn{1}{c}{\bf 100}  &\multicolumn{1}{c}{\bf 200}  &\multicolumn{1}{c}{\bf 1000}
\\ \hline \noalign{\vskip 3pt}  
LN bottom-layer-cost, reported in \cite{pezeshki2015deconstructing} & 1.69$\pm$0.18&-&1.05$\pm$0.02\\
LN bottom-layer-cost, reported in \cite{rasmus2015semi}        & 1.09$\pm$0.32 &-& 0.90$\pm$0.05\\
\hline \noalign{\vskip 1.5pt}  
LN bottom-layer-cost, reproduced in this work (see appendix \ref{app:semi}) & 1.36$\pm$0.21 & 1.24$\pm$0.09 & 1.04$\pm$0.06 \\
LN bottom-layer-cost within EBGAN framework   & \textbf{1.04$\pm$0.12} & \textbf{0.99$\pm$0.12} & \textbf{0.89$\pm$0.04} \\
Relative percentage improvement & 23.5\% & 20.2\% & 14.4\%
\\ \hline
\end{tabular}
\end{center}
\end{table}

\subsection{LSUN \& CelebA}
\label{sub:lsun}

\begin{figure}[ht]
\centering
\minipage{0.48\textwidth}
\includegraphics[width=\linewidth]{./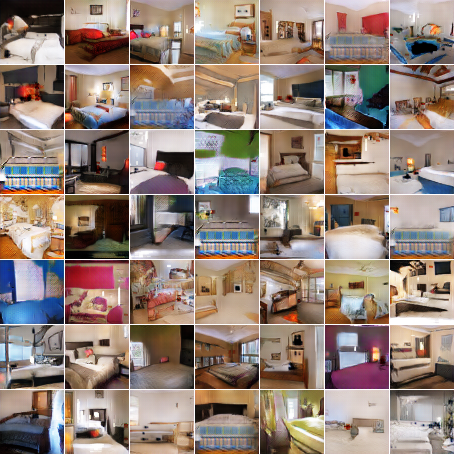}
\endminipage \hspace{5pt}
\minipage{0.48\textwidth}
\includegraphics[width=\linewidth]{./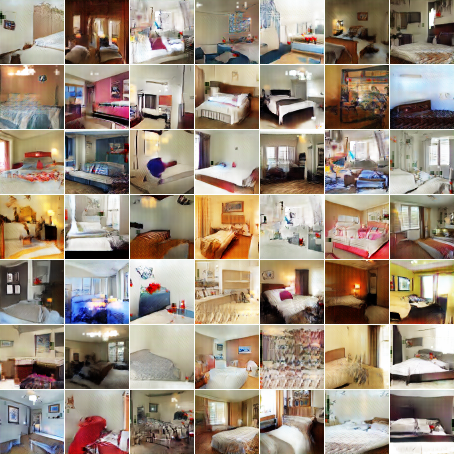}
\endminipage
\caption{Generation from the LSUN bedroom dataset. Left(a): DCGAN generation. Right(b): EBGAN-PT generation.}
\label{fig:lsun_full_gen}
\end{figure}

\begin{figure}[ht]
\centering
\minipage{0.48\textwidth}
\includegraphics[width=\linewidth]{./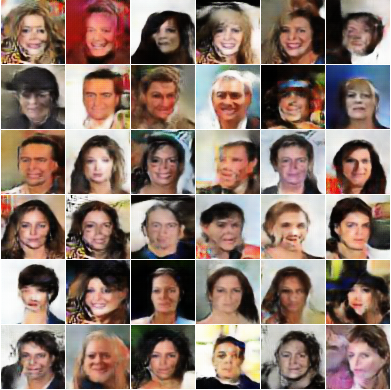}
\endminipage \hspace{5pt}
\minipage{0.48\textwidth}
\includegraphics[width=\linewidth]{./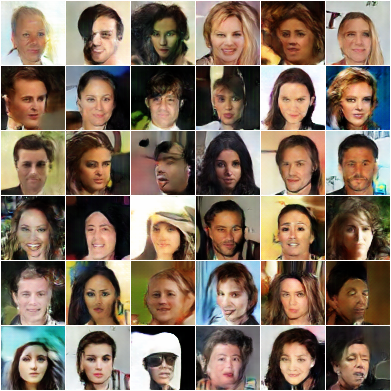}
\endminipage
\caption{Generation from the CelebA dataset. Left(a): DCGAN generation. Right(b): EBGAN-PT generation.}
\label{fig:celeba_gen}
\end{figure}
We apply the EBGAN framework with deep convolutional architecture to generate $64 \times 64$ RGB images, a more realistic task, using the LSUN bedroom dataset \citep{yu2015lsun} and the large-scale face dataset CelebA under alignment \citep{liu2015deep}.
To compare EBGANs with DCGANs \citep{radford2015unsupervised}, we train a DCGAN model under the same configuration and show its generation side-by-side with the EBGAN model, in figures \ref{fig:lsun_full_gen} and \ref{fig:celeba_gen}. The specific settings are listed in appendix \ref{app:lsun}.

\subsection{ImageNet}
\label{sub:imagenet}

\begin{figure}[ht]
  \centering
  \vspace{-3mm}
  \includegraphics[width=1\linewidth]{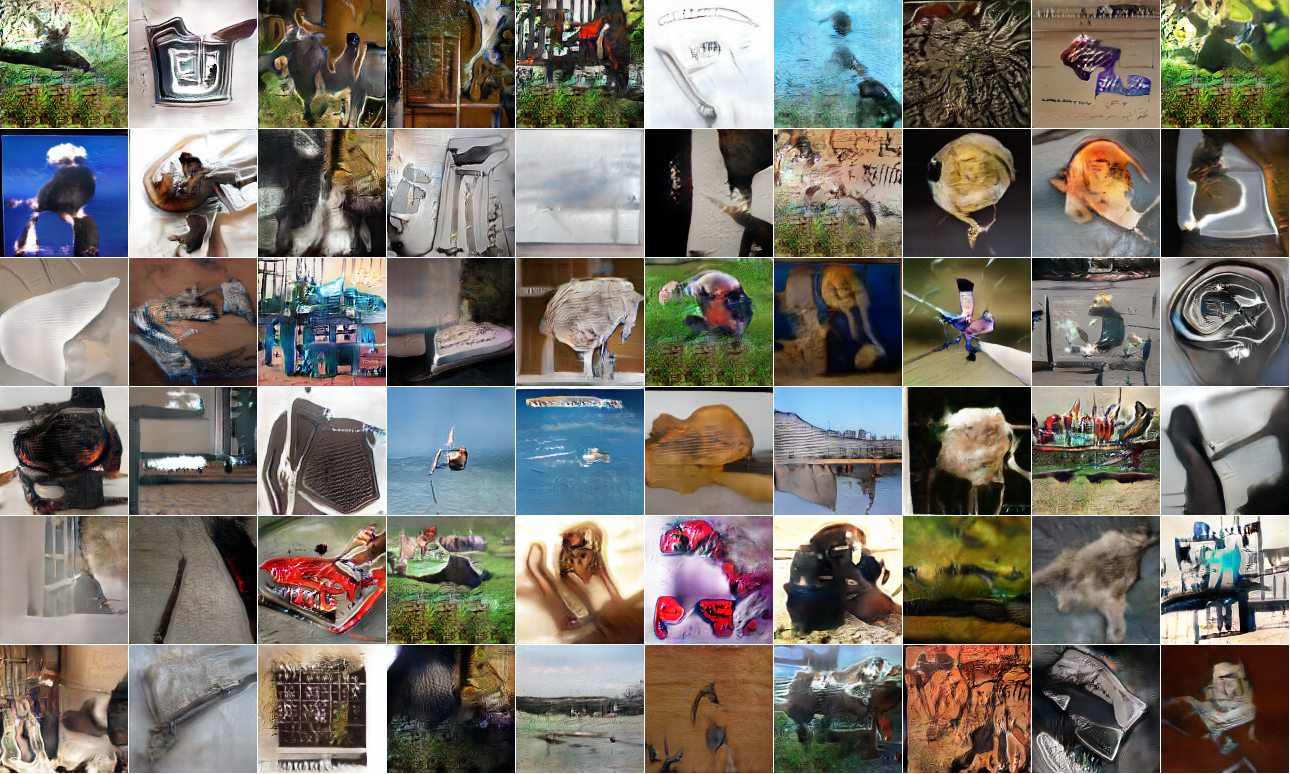}
  \caption{\label{fig:imagenet_128} ImageNet $128 \times 128$ generations using an EBGAN-PT.}
  \vspace{-2mm}
\end{figure}

\begin{figure}[ht]
  \centering
  \vspace{-1mm}
  \includegraphics[width=1\linewidth]{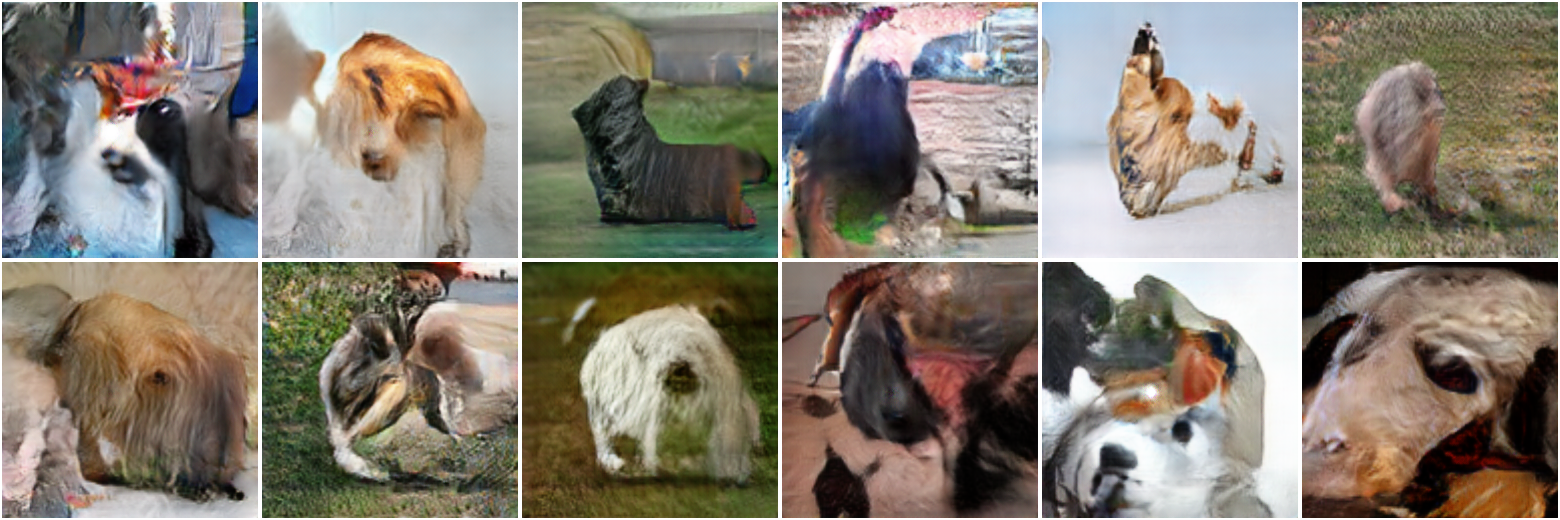}
  \caption{\label{fig:imagenet_256} ImageNet $256 \times 256$ generations using an EBGAN-PT.}
  \vspace{-2mm}
\end{figure}

Finally, we trained EBGANs to generate high-resolution images on ImageNet \citep{ILSVRC15}.
Compared with the datasets we have experimented so far, ImageNet presents an extensively larger and wilder space, so modeling the data distribution by a generative model becomes very challenging.
We devised an experiment to generate $128 \times 128$ images, trained on the full ImageNet-1k dataset, which contains roughly 1.3 million images from 1000 different categories. 
We also trained a network to generate images of size $256 \times 256$, on a dog-breed subset of ImageNet, using the wordNet IDs provided by \cite{vinyals2016matching}.
The results are shown in figures \ref{fig:imagenet_128} and \ref{fig:imagenet_256}.
Despite the difficulty of generating images on a high-resolution level, we observe that EBGANs are able to learn about the fact that objects appear in the foreground, together with various background components resembling grass texture, sea under the horizon, mirrored mountain in the water, buildings, etc. 
In addition, our $256 \times 256$ dog-breed generations, although far from realistic, do reflect some  knowledge about the appearances of dogs such as their body, furs and eye. 

\section{Outlook}
We bridge two classes of unsupervised learning methods -- GANs and auto-encoders -- and revisit the GAN framework from an alternative energy-based perspective. EBGANs show better convergence pattern and scalability to generate high-resolution images. 
A family of energy-based loss functionals presented in \cite{lecun2006tutorial} can easily be incorporated into the EBGAN framework. 
For the future work, the conditional setting \citep{denton2015deep,mathieu2015deep} is a promising setup to explore.
We hope the future research will raise more attention on a broader view of GANs from the energy-based perspective.

\vspace{-2mm}
\subsection*{Acknowledgment}
\vspace{-1mm}
We thank Emily Denton, Soumith Chitala, Arthur Szlam, Marc'Aurelio Ranzato, Pablo Sprechmann, Ross Goroshin and Ruoyu Sun for fruitful discussions.
We also thank Emily Denton and Tian Jiang for their help with the manuscript.

{\small
\bibliography{iclr2017_conference}
\bibliographystyle{iclr2017_conference}
}

\newpage
\appendix
\section{Appendix: Technical points of section~\ref{sub:optim}}
\label{app:technical}

\begin{lemma}
\label{lem:phi}
Let $a,b \geq 0$, $\varphi(y)=ay + b\left[m-y\right]^+$. The minimum of $\varphi$ on $[0,+\infty)$ exists and is reached in $m$ if $a<b$, and it is reached in $0$ otherwise (the minimum may not be unique).
\end{lemma}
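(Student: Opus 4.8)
The plan is to exploit the fact that $\varphi$ is piecewise affine with a single breakpoint at $y=m$, and simply read off the minimizer from the signs of the two slopes. First I would split the half-line $[0,+\infty)$ into $[0,m]$ and $[m,+\infty)$. On $[m,+\infty)$ we have $[m-y]^+=0$, so $\varphi(y)=ay$; since $a\geq 0$ this is nondecreasing and its minimum over that interval is attained at $y=m$ with value $am$. On $[0,m]$ we have $[m-y]^+=m-y$, so $\varphi(y)=ay+b(m-y)=bm+(a-b)y$, an affine function of $y$ with slope $a-b$.

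Next I would split into the two cases of the statement. If $a<b$, the slope $a-b$ is negative, so $\varphi$ is (strictly) decreasing on $[0,m]$ and its minimum over $[0,m]$ is attained at $y=m$, again with value $am$; combined with the previous paragraph this shows $y=m$ is a global minimizer and $\min\varphi=am$. If $a\geq b$, the slope $a-b$ is nonnegative, so $\varphi$ is nondecreasing on $[0,m]$, its minimum over $[0,m]$ is $\varphi(0)=bm$, and since $bm\leq am=\min_{[m,\infty)}\varphi$ the point $y=0$ is a global minimizer with $\min\varphi=bm$. In both cases the infimum is attained, which is the existence assertion; and since the two cases $a<b$ and $a\geq b$ exhaust all of $\{a,b\geq 0\}$, the characterization of the minimizer is complete and suffices for the use of this lemma in the proof of Theorem~\ref{theo:nash}.

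There is essentially no obstacle here; the only mild care needed is bookkeeping at the shared endpoint $y=m$ (which is exactly the claimed minimizer when $a<b$) and handling the degenerate subcases that explain the parenthetical non-uniqueness: when $a=b$ the function is constant equal to $bm$ on all of $[0,m]$, and when $a=0$ it is constant equal to $0$ on all of $[m,+\infty)$, so in those situations the set of minimizers is an interval rather than a single point, yet it still contains the stated point ($m$, resp.\ $0$).
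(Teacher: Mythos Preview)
Your argument is correct and follows essentially the same approach as the paper: both split $[0,+\infty)$ at the breakpoint $y=m$, observe that $\varphi$ is affine on each piece with slopes $a-b$ and $a$ respectively, and read off the minimizer from the signs of those slopes (the paper phrases this via $\varphi'$, you via the explicit affine forms, but the content is identical). Your handling of the non-uniqueness cases $a=b$ and $a=0$ also matches the paper's remark.
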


\begin{proof}
The function $\varphi$ is defined on $[0, +\infty)$, its derivative is defined on $[0, +\infty) \backslash \{m\}$ and $\varphi'(y) = a-b$ if $y\in [0, m)$ and $\varphi'(y) = a$ if $y\in (m, +\infty)$. \\
So when $a < b$, the function is decreasing on $[0, m)$ and increasing on $(m, +\infty)$. Since it is continuous, it has a minimum in $m$.
It may not be unique if $a=0$ or $a-b=0$. \\
On the other hand, if $a \geq b$ the function $\varphi$ is  increasing on $[0, +\infty)$, so $0$ is a minimum.
\end{proof}

\begin{lemma}
\label{lem:indicator}
If $p$ and $q$ are probability densities, then $\int_x \mathbbm{1}_{p(x) < q(x)} \ud x = 0$ if and only if $\int_x \mathbbm{1}_{p(x) \neq q(x)} \ud x = 0$.
\end{lemma}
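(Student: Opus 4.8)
The plan is to prove the two implications separately; the ``if'' direction is immediate, and the ``only if'' direction carries all the content. For the ``if'' direction, observe the pointwise set inclusion $\{x : p(x) < q(x)\} \subseteq \{x : p(x) \neq q(x)\}$, hence $0 \le \int_x \mathbbm{1}_{p(x)<q(x)}\ud x \le \int_x \mathbbm{1}_{p(x)\neq q(x)}\ud x = 0$, which gives the claim.

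For the ``only if'' direction, assume $\int_x \mathbbm{1}_{p(x)<q(x)}\ud x = 0$, i.e.\ the set $\{p<q\}$ is null. Since $\{p \neq q\} = \{p<q\} \cup \{p>q\}$, it suffices to show that $A := \{x : p(x) > q(x)\}$ is also null. The key observation is that $p$ and $q$, being probability densities, both integrate to $1$, so $\int_x \big(p(x)-q(x)\big)\ud x = 0$. Decomposing this integral over $\{p<q\}$, $\{p=q\}$, and $A$: the middle piece vanishes since the integrand is $0$ there, and the $\{p<q\}$ piece vanishes because $\{p<q\}$ is null (the integrand is $0$ almost everywhere on it). We are left with $\int_A \big(p(x)-q(x)\big)\ud x = 0$. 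But $p - q$ restricted to $A$ is a nonnegative measurable function with zero integral, hence it is $0$ almost everywhere on $A$; since by definition $p - q > 0$ strictly on $A$, this forces $A$ to be null. Therefore $\int_x \mathbbm{1}_{p(x)\neq q(x)}\ud x \le \int_x \mathbbm{1}_{p(x)<q(x)}\ud x + \int_x \mathbbm{1}_{p(x)>q(x)}\ud x = 0$.

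The only thing requiring care — really bookkeeping rather than an obstacle — is the measure-theoretic step: that integrating $p-q$ over the null set $\{p<q\}$ contributes $0$, and the standard fact that a nonnegative integrable function with vanishing integral is zero almost everywhere. Both are elementary, and the argument is symmetric in $p$ and $q$, so one could equally run it through $\{q>p\}$; no deeper machinery is needed.
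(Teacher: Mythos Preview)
Your proof is correct and follows essentially the same route as the paper's: both use that $p$ and $q$ integrate to $1$, decompose over the sets $\{p<q\}$, $\{p=q\}$, $\{p>q\}$, and conclude that $\int_{\{p>q\}}(p-q)=0$ forces $\{p>q\}$ to be null via the standard ``nonnegative integrand with zero integral'' argument. Your write-up is slightly more complete in that you explicitly dispatch the trivial ``if'' direction via the inclusion $\{p<q\}\subseteq\{p\neq q\}$, which the paper leaves implicit.
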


\begin{proof}
Let's assume that $\int_x \mathbbm{1}_{p(x) < q(x)} \ud x = 0$. Then
\begin{eqnarray}
&& \int_x \mathbbm{1}_{p(x) > q(x)} (p(x) - q(x))\ud x \\
& = & \int_x (1 - \mathbbm{1}_{p(x) \leq q(x)}) (p(x) - q(x))\ud x \\
& = & \int_x p(x) \ud x - \int_x q(x) \ud x + \int_x \mathbbm{1}_{p(x) \leq q(x)} (p(x) - q(x))\ud x \\
& = & 1 - 1 + \int_x \left( \mathbbm{1}_{p(x) < q(x)} +\mathbbm{1}_{p(x) = q(x)}\right) ( p(x) - q(x)) \ud x \\
& = &\int_x \mathbbm{1}_{p(x) < q(x)} ( p(x) - q(x)) \ud x  + \int_x \mathbbm{1}_{p(x) = q(x)} ( p(x) - q(x)) \ud x\\
& = & 0 + 0 = 0
\end{eqnarray}
So $\int_x \mathbbm{1}_{p(x) > q(x)} (p(x) - q(x))\ud x = 0$ and since the term in the integral is always non-negative, $\mathbbm{1}_{p(x) > q(x)} (p(x) - q(x)) = 0$ for almost all $x$. And $p(x) - q(x) = 0$ implies $\mathbbm{1}_{p(x) > q(x)} = 0$, so $\mathbbm{1}_{p(x) > q(x)} = 0$ almost everywhere. 
Therefore $\int_x \mathbbm{1}_{p(x) > q(x)} \ud x = 0$ which completes the proof, given the hypothesis.
\end{proof}

\paragraph{Proof of theorem~\ref{theo:charac}}
The sufficient conditions are obvious.
The necessary condition on $G^*$ comes from theorem~\ref{theo:nash}, and the necessary condition on $D^*(x) \leq m$ is from the proof of theorem~\ref{theo:nash}.\\
Let us now assume that $D^*(x)$ is not constant almost everywhere and find a contradiction.
If it is not, then there exists a constant $C$ and a set $\mathcal{S}$ of non-zero measure such that $\forall x \in \mathcal{S}, D^*(x) \leq C$ and
$\forall x \not\in \mathcal{S}, D^*(X) > C$. In addition we can choose $\mathcal{S}$ such that there exists a subset $\mathcal{S'}\subset \mathcal{S}$ of non-zero measure such that $p_{data}(x)>0$ on $\mathcal{S'}$ (because of the assumption in the footnote). We can build a generator $G_0$ such that $p_{G_0}(x)\leq p_{data}(x)$ over $\mathcal{S}$ and $p_{G_0}(x) < p_{data}(x)$ over $\mathcal{S'}$. We compute
\begin{align}
U(G^*,D^*)-U(G_0,D^*)
&= \int_x (p_{data}-p_{G_0})D^*(x)\ud x \\
&=\int_x(p_{data}-p_{G_0})(D^*(x)-C)\ud x \\
&=\int_\mathcal{S}(p_{data}-p_{G_0})(D^*(x)-C)\ud x +\int_{\mathcal{R}^N\backslash\mathcal{S}}(p_{data}-p_{G_0})(D^*(x)-C)\ud x \\
&> 0
\end{align}
which violates equation \ref{nash:U}.

\newpage
\section{Appendix: More interpretations about GANs and energy-based learning}
\label{app:more}
\subsection*{Two interpretations of GANs}
GANs can be interpreted in two complementary ways. In the first interpretation, the key component is the generator, and the discriminator plays the role of a trainable objective function. Let us imagine that the data lies on a manifold. Until the generator produces samples that are recognized as being on the manifold, it gets a gradient indicating how to modify its output so it could approach the manifold. 
In such scenario, the discriminator acts to punish the generator when it produces samples that are outside the manifold. 
This can be understood as a way to train the generator with a set of possible desired outputs (e.g. the manifold) instead of a single desired output as in traditional supervised learning.

For the second interpretation, the key component is the discriminator, and the generator is merely trained to produce contrastive samples. 
We show that by iteratively and interactively feeding contrastive samples, the generator enhances the semi-supervised learning performance of the discriminator (e.g. Ladder Network), in section \ref{sub:semi}.

\section{Appendix: Experiment settings}
\label{app:pt}

\subsection*{More details about the grid search}
\label{app:grid}
For training both EBGANs and GANs for the grid search, we use the following setting:
\begin{tightitemizeleft}
\item Batch normalization \citep{ioffe2015batch} is applied after each weight layer, except for the generator output layer and the discriminator input layer \citep{radford2015unsupervised}.
\item Training images are scaled into range [-1,1]. Correspondingly the generator output layer is followed by a \texttt{Tanh} function.
\item \texttt{ReLU} is used as the non-linearity function.
\item Initialization: the weights in $D$ from $\mathcal{N}(0, 0.002)$ and in $G$ from $\mathcal{N}(0, 0.02)$. The bias are initialized to be $0$.
\end{tightitemizeleft}

We evaluate the models from the grid search by calculating a modified version of the inception score, $I' = E_x KL(p(y)||p(y|\vec{x}))$, where $\vec{x}$ denotes a generated sample and $y$ is the label predicted by a MNIST classifier that is trained off-line using the entire MNIST training set.
Two main changes were made upon its original form: (i)-we swap the order of the distribution pair; (ii)-we omit the $e^{(\cdot)}$ operation.
The modified score condenses the histogram in figure \ref{fig:hist_all} and figure \ref{fig:hist_sep_all}.
It is also worth noting that although we inherit the name ``inception score'' from \cite{salimans2016improved}, the evaluation isn't related to the ``inception'' model trained on ImageNet dataset. The classifier is a regular 3-layer ConvNet trained on MNIST.

The generations showed in figure \ref{fig:mnist_gen} are the best GAN or EBGAN (obtaining the best $I'$ score) from the grid search. Their configurations are:
\begin{tightitemizeleft}
\item figure \ref{fig:mnist_gen}(a): 
\texttt{nLayerG}=5, \texttt{nLayerD}=2, \texttt{sizeG}=1600, \texttt{sizeD}=1024, \texttt{dropoutD}=0, \texttt{optimD}=SGD, \texttt{optimG}=SGD, \texttt{lr}=0.01.
\item figure \ref{fig:mnist_gen}(b):
\texttt{nLayerG}=5, \texttt{nLayerD}=2, \texttt{sizeG}=800, \texttt{sizeD}=1024, \texttt{dropoutD}=0, \texttt{optimD}=ADAM, \texttt{optimG}=ADAM, \texttt{lr}=0.001, \texttt{margin}=10.
\item figure \ref{fig:mnist_gen}(c): same as (b), with $\lambda_{PT}=0.1$.
\end{tightitemizeleft}

\subsection*{LSUN \& CelebA}
\label{app:lsun}
We use a deep convolutional generator analogous to DCGAN's and a deep convolutional auto-encoder for the discriminator. The auto-encoder is composed of strided convolution modules in the feedforward pathway and fractional-strided convolution modules in the feedback pathway. 
We leave the usage of upsampling or switches-unpooling \citep{zhao2015stacked} to future research. We also followed the guidance suggested by \cite{radford2015unsupervised} for training EBGANs.
The configuration of the deep auto-encoder is:
\begin{tightitemize}
\item Encoder: \texttt{(64)4c2s-(128)4c2s-(256)4c2s}
\item Decoder: \texttt{(128)4c2s-(64)4c2s-(3)4c2s}
\end{tightitemize}
where ``{\small \texttt{(64)4c2s}}'' denotes a convolution/deconvolution layer with 64 output feature maps and kernel size 4 with stride 2. The margin $m$ is set to $80$ for LSUN and $20$ for CelebA.

\subsection*{ImageNet}
\label{app:imagenet}
We built deeper models in both $128\times128$ and $256\times256$ experiments, in a similar fashion to section \ref{sub:lsun},
{\small
\begin{tightitemize}
\item $128 \times 128$ model:
    \begin{tightitemize}
	    \item Generator: \texttt{(1024)4c-(512)4c2s-(256)4c2s-(128)4c2s-\\(64)4c2s-(64)4c2s-(3)3c}
        \item Noise \#planes: \texttt{100-64-32-16-8-4}
		\item Encoder: \texttt{(64)4c2s-(128)4c2s-(256)4c2s-(512)4c2s}
		\item Decoder: \texttt{(256)4c2s-(128)4c2s-(64)4c2s-(3)4c2s}
        \item Margin: $40$
	\end{tightitemize}

\item $256 \times 256$ model:
    \begin{tightitemize}
	    \item Generator: \texttt{(2048)4c-(1024)4c2s-(512)4c2s-(256)4c2s-(128)4c2s-\\(64)4c2s-(64)4c2s-(3)3c}
        \item Noise \#planes: \texttt{100-64-32-16-8-4-2}
		\item Encoder: \texttt{(64)4c2s-(128)4c2s-(256)4c2s-(512)4c2s}
		\item Decoder: \texttt{(256)4c2s-(128)4c2s-(64)4c2s-(3)4c2s}
        \item Margin: $80$
	\end{tightitemize}
\end{tightitemize}
}
Note that we feed noise into every layer of the generator where each noise component is initialized into a 4D tensor and concatenated with current feature maps in the feature space. Such strategy is also employed by \cite{salimans2016improved}.

\section{Appendix: Semi-supervised learning experiment setting}
\label{app:semi}
\subsection*{Baseline model}
As stated in section \ref{sub:semi}, we chose a bottom-layer-cost Ladder Network as our baseline model. 
Specifically, we utilize an identical architecture as reported in both papers \citep{rasmus2015semi,pezeshki2015deconstructing}; namely a fully-connected network of size \texttt{784-1000-500-250-250-250}, with batch normalization and ReLU following each linear layer.
To obtain a strong baseline, we tuned the weight of the reconstruction cost with values from the set \{$\frac{5000}{784}$, $\frac{2000}{784}$, $\frac{1000}{784}$, $\frac{500}{784}$\}, while fixing the weight on the classification cost to $1$. In the meantime, we also tuned the learning rate with values \{0.002, 0.001, 0.0005, 0.0002, 0.0001\}. We adopted Adam as the optimizer with $\beta_1$ being set to 0.5. The minibatch size was set to 100. All the experiments are finished by 120,000 steps. 
We use the same learning rate decay mechanism as in the published papers -- starting from the two-thirds of total steps (i.e., from step \#80,000) to linearly decay the learning rate to $0$.
The result reported in section \ref{sub:semi} was done by the best tuned setting: $\lambda_{L2}=\frac{1000}{784}, lr=0.0002$.

\subsection*{EBGAN-LN model}
We place the same Ladder Network architecture into our EBGAN framework and train this EBGAN-LN model the same way as we train the EBGAN auto-encoder model. For technical details, we started training the EBGAN-LN model from the margin value 16 and gradually decay it to 0 within the first 60,000 steps. By the time, we found that the reconstruction error of the real image had already been low and reached the limitation of the architecture (Ladder Network itself); besides the generated images exhibit good quality (shown in figure \ref{fig:mnist_ln}). Thereafter we turned off training the generator but kept training the discriminator for another 120,000 steps. We set the initial learning rates to be $0.0005$ for discriminator and $0.00025$ for generator.
The other setting is kept consistent with the best baseline LN model. The learning rate decay started at step \#120,000 (also two-thirds of the total steps).

\subsection*{Other details}
\begin{tightitemizeleft}
\item Notice that we used the 28$\times$28 version (unpadded) of the MNIST dataset in the EBGAN-LN experiment. For the EBGAN auto-encoder grid search experiments, we used the zero-padded version, i.e., size 32$\times$32. No phenomenal difference has been found due to the zero-padding.
\item We generally took the $\ell_2$ norm of the discrepancy between input and reconstruction for the loss term in the EBGAN auto-encoder model as formally written in section \ref{sub:obj}. However, for the EBGAN-LN experiment, we followed the original implementation of Ladder Network using a vanilla form of $\ell_2$ loss. 
\item Borrowed from \cite{salimans2016improved}, the batch normalization is adopted without the learned parameter $\gamma$ but merely with a bias term $\beta$. It still remains unknown whether such trick could affect learning in some non-ignorable way, so this might have made our baseline model not a strict reproduction of the published models by \cite{rasmus2015semi} and \cite{pezeshki2015deconstructing}.
\end{tightitemizeleft}

\section{Appendix: tips for setting a good energy margin value}
It is crucial to set a proper energy margin value $m$ in the framework of EBGAN, from both theoretical and experimental perspective.
Hereby we provide a few tips:
\begin{tightitemize}
\item Delving into the formulation of the discriminator loss made by equation \ref{discloss}, we suggest a numerical balance between its two terms which concern \emph{real} and \emph{fake} sample respectively.
The second term is apparently bounded by $[0, m]$ (assuming the energy function $D(x)$ is non-negative). It is desirable to make the first term bounded in a similar range.
In theory, the upper bound of the first term is essentially determined by (i)-the capacity of $D$; (ii)-the complexity of the dataset.
\item In practice, for the EBGAN auto-encoder model, one can run $D$ (the auto-encoder) alone on the real sample dataset and monitor the loss. When it converges, the consequential loss implies a rough limit on how well such setting of $D$ is capable to fit the dataset. This usually suggests a good start for a hyper-parameter searching on $m$.
\item $m$ being overly large results in a training instability/difficulty, while $m$ being too small is prone to the mode-dropping problem. This property of $m$ is depicted in figure \ref{fig:mnist_margin}.
\item One successful technique, as we introduced in appendix \ref{app:semi}, is to start from a large $m$ and gradually decayed it to 0 along training proceeds.
Unlike the feature matching semi-supervised learning technique proposed by \cite{salimans2016improved}, we show in figure \ref{fig:mnist_ln} that not only does the EBGAN-LN model achieve a good semi-supervised learning performance, it also produces satisfactory generations.
\end{tightitemize}

Abstracting away from the practical experimental tips, the theoretical understanding of EBGAN in section \ref{sub:optim} also provides some insight for setting a feasible $m$.
For instance, as implied by Theorem \ref{theo:charac}, setting a large $m$ results in a broader range of $\gamma$ to which $D^*(x)$ may converge. Instability may come after an overly large $\gamma$ because it generates two strong gradients pointing to opposite directions, from loss \ref{discloss}, which would demand more finicky optimization setting.

\vspace{2mm} 
\begin{figure}[ht]
\centering
\begin{tabular}{c}
\includegraphics[width=0.7\linewidth]{./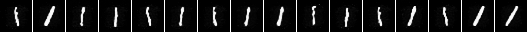}\\
\includegraphics[width=0.7\linewidth]{./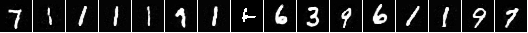}\\
\includegraphics[width=0.7\linewidth]{./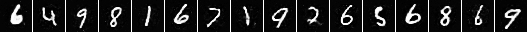}\\
\includegraphics[width=0.7\linewidth]{./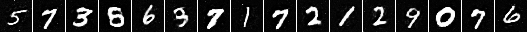}\\
\includegraphics[width=0.7\linewidth]{./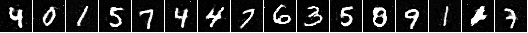}\\
\includegraphics[width=0.7\linewidth]{./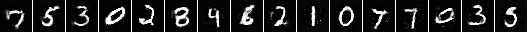}\\
\includegraphics[width=0.7\linewidth]{./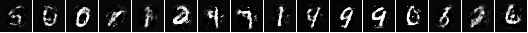}\\
\includegraphics[width=0.7\linewidth]{./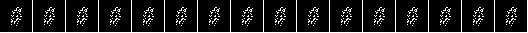}
\end{tabular}
\caption{Generation from the EBGAN auto-encoder model trained with different $m$ settings. From top to bottom, $m$ is set to 1, 2, 4, 6, 8, 12, 16, 32 respectively. The rest setting is \texttt{nLayerG}=5, \texttt{nLayerD}=2, \texttt{sizeG}=1600, \texttt{sizeD}=1024, \texttt{dropoutD}=0, \texttt{optimD}=ADAM, \texttt{optimG}=ADAM, \texttt{lr}=0.001.}
\label{fig:mnist_margin}
\end{figure}

\begin{figure}[ht]
\centering
\begin{tabular}{cc}
\includegraphics[width=0.3\linewidth]{./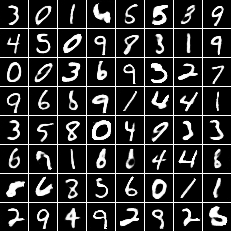} & \includegraphics[width=0.3\linewidth]{./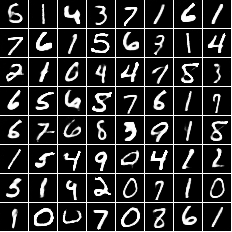}
\end{tabular}
\caption{Generation from the EBGAN-LN model. The displayed generations are obtained by an identical experimental setting described in appendix \ref{app:semi}, with different random seeds. As we mentioned before, we used the unpadded version of the MNIST dataset (size 28$\times$28) in the EBGAN-LN experiments.}
\label{fig:mnist_ln}
\end{figure}

\section{Appendix: more generation}
\subsection*{LSUN augmented version training}
\label{app:auglsun}
For LSUN bedroom dataset, aside from the experiment on the whole images, we also train an EBGAN auto-encoder model based on dataset augmentation by cropping patches. All the patches are of size $64 \times 64$ and cropped from $96 \times 96$ original images. The generation is shown in figure \ref{fig:lsun_aug_gen}.

\begin{figure}[ht]
\centering
\minipage{0.48\textwidth}
\includegraphics[width=\linewidth]{./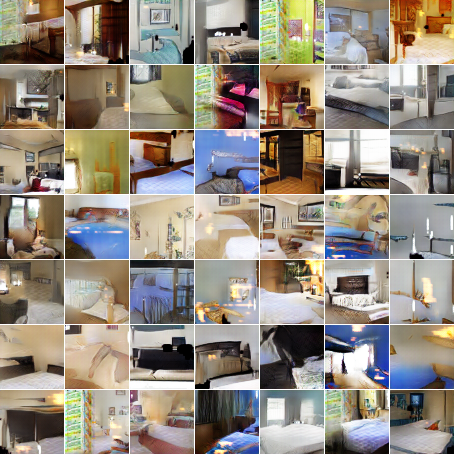}
\endminipage \hspace{5pt}
\minipage{0.48\textwidth}
\includegraphics[width=\linewidth]{./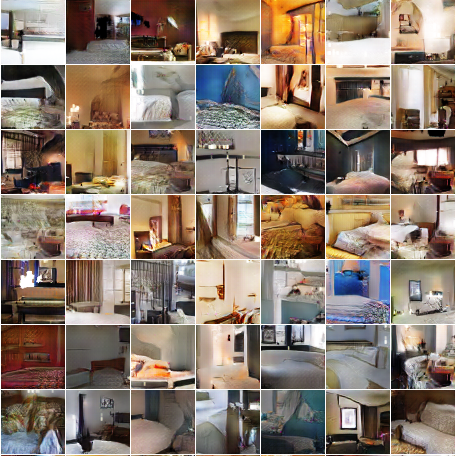}
\endminipage
\caption{Generation from augmented-patch version of the LSUN bedroom dataset. Left(a): DCGAN generation. Right(b): EBGAN-PT generation.}
\label{fig:lsun_aug_gen}
\end{figure}

\subsection*{Comparison of EBGANs and EBGAN-PTs}
To further demonstrate how the pull-away term (PT) may influence EBGAN auto-encoder model training, we chose both the whole-image and augmented-patch version of the LSUN bedroom dataset, together with the CelebA dataset to make some further experimentation.
The comparison of EBGAN and EBGAN-PT generation are showed in figure \ref{fig:lsun_full_gen_pt}, figure \ref{fig:lsun_aug_gen_pt} and figure \ref{fig:celeba_gen_pt}.
Note that all comparison pairs adopt identical architectural and hyper-parameter setting as in section \ref{sub:lsun}. The cost weight on the PT is set to $0.1$.

\begin{figure}[ht]
\centering
\minipage{0.47\textwidth}
\includegraphics[width=\linewidth]{./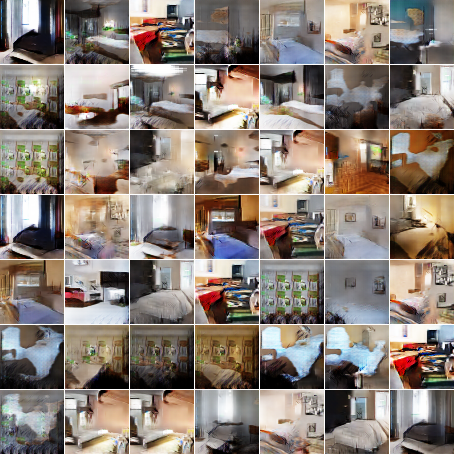}
\endminipage \hspace{5pt}
\minipage{0.47\textwidth}
\includegraphics[width=\linewidth]{./lsun_full_dot.png}
\endminipage
\caption{Generation from whole-image version of the LSUN bedroom dataset. Left(a): EBGAN. Right(b): EBGAN-PT.}
\label{fig:lsun_full_gen_pt}
\end{figure}

\begin{figure}[ht]
\centering
\minipage{0.47\textwidth}
\includegraphics[width=\linewidth]{./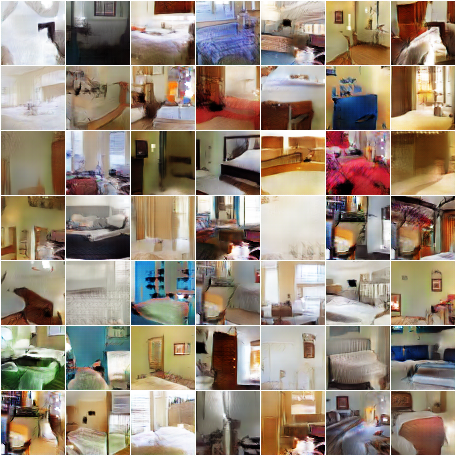}
\endminipage \hspace{5pt}
\minipage{0.47\textwidth}
\includegraphics[width=\linewidth]{./lsun_aug_dot.png}
\endminipage
\caption{Generation from augmented-patch version of the LSUN bedroom dataset. Left(a): EBGAN. Right(b): EBGAN-PT.}
\label{fig:lsun_aug_gen_pt}
\end{figure}

\begin{figure}[ht]
\centering
\minipage{0.47\textwidth}
\includegraphics[width=\linewidth]{./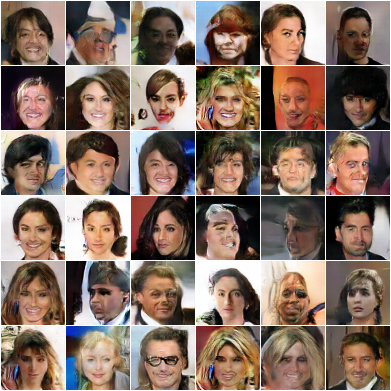}
\endminipage \hspace{5pt}
\minipage{0.47\textwidth}
\includegraphics[width=\linewidth]{./celeba_dot.png}
\endminipage
\caption{Generation from the CelebA dataset. Left(a): EBGAN. Right(b): EBGAN-PT.}
\label{fig:celeba_gen_pt}
\end{figure}

\end{document}